\def\w{\mathbf{w}}
\def\e{\mathrm{e}}
\def\E#1{\mathbb{E}\left[#1\right]}
\newtheorem{theorem}{Theorem}
\title{General non-linear Bellman equations}
\author{
Hado van Hasselt,
John Quan,
Matteo Hessel,
Zhongwen Xu,
Diana Borsa,
Andre Barreto,
\\DeepMind, London, UK}
\begin{abstract}
We consider a general class of non-linear Bellman equations. These open up a design space of algorithms that have interesting properties, which has two potential advantages. First, we can perhaps better model natural phenomena. For instance, hyperbolic discounting has been proposed as a mathematical model that matches human and animal data well, and can therefore be used to explain preference orderings. We present a different mathematical model that matches the same data, but that makes very different predictions under other circumstances. Second, the larger design space can perhaps lead to algorithms that perform better, similar to how discount factors are often used in practice even when the true objective is undiscounted. We show that many of the resulting Bellman operators still converge to a fixed point, and therefore that the resulting algorithms are reasonable and inherit many beneficial properties of their linear counterparts.
\end{abstract}
\begin{document}
\maketitle

Reinforcement learning \citep{SuttonBarto:2018} is a framework for decision making under uncertainty, where a learning system, referred to as the \textit{agent}, must learn to act in its surrounding \textit{environment} so as to maximize such sum of discounted rewards $G_t = \sum_{n=1}^\infty \gamma^n R_{t+n}$, where $\gamma\in [0,1]$ is a discount factor that discounts the importance of future rewards over immediate rewards. According to the reward hypothesis \citep{SuttonBarto:2018}, any goal can be formulated as such a sum. The behaviour of an agent is characterized by a policy $\pi$: a (possibly stochastic) mapping from environment \textit{states} to a set of available \textit{actions}. 

Richard Bellman noted that the expected values $v(s) = \E{ G_t \mid S_t = s}$ of cumulative rewards admit a recursive formulation \citep{Bellman:57}.  Specifically, the value $v_{\pi}(s) = \E{ G_t \mid A_n \sim \pi(S_n), \forall n \ge t}$ of a state $s$ under a policy $\pi$ can be written as
\[
v_\pi(s)
~~=~~
\E{ R_{t+1} + \gamma v_\pi(S_{t+1}) \mid S_t = s, A_t \sim \pi(S_t) } \,,
\]
where $R_{t+1}$ is the stochastic reward received on the transition from state $S_t$ to state $S_{t+1}$ after executing action $A_t$, as sampled according to policy $\pi$. Expected returns conditioned on a specific first action, or action-values, can also be expressed recursively $q_\pi(s,a) = \E{ R_{t+1} + \gamma q_\pi(S_{t+1}, \pi(S_{t+1})) \mid S_t = s, A_t = a }$. This formulation is useful because an improved policy $\pi'$ can be directly inferred from the action values for a policy $\pi$. For instance, the agent may act according to the greedy policy of selecting, in each state, the action with the highest action value under $\pi$.

Given a complete specification of the environment dynamics, linear Bellman equations such as described above may be written as a system of linear equations, which admit closed-form solutions, although the computational cost of such solutions scales with the cube of the number of states. However, efficient dynamic programming algorithms \citep{Bellman:57, Howard:1960, Puterman:1994} are also available to solve these Bellman equations iteratively. These equation can also be solved through sample-based temporal-difference learning \citep{Sutton:1988, Watkins:1989}, which uses the same recursion to create efficient algorithms that can learn model-free (i.e. without relying with a specification of the environment's dynamics), directly from data obtained through interaction with the environment.

\section{Non-linear Bellman equations}
The recursive formulation is appealing as it allows for algorithms that can sample and update estimates of these values independently of temporal span \citep{vanHasseltSutton:2015}.  The canonical formulation limits the modelling power of Bellman equations to cumulative rewards that are discounted exponentially: the weight on a reward $t$ steps in the future will be discounted with a factor $\gamma^t$.

We consider a broader class of Bellman equations that are non-linear in the rewards and future values:
\[
v(s) = \E{ f(R_{t+1}, v(S_{t+1})) \mid  S_t = s, A_t \sim \pi(S_t) } \,.
\]
This generalises the standard Bellman equation which is obtained for $f(r, v) = r + \gamma v$. We conjecture that the additional flexibility this provides can be useful for at least two purposes: 1) to model a wider range of natural phenomena, including to explain human and animal behaviour, and 2) to allow more efficient learning algorithms for prediction and control by widening the flexibility in design choices for such algorithms.

Three slightly more specific formulations will be of particular interest to us: non-linear transformations to the reward ($f(r, v) = g(r) + \gamma v$), to the value ($f(r, v) = r + g(v)$), and to the target as a whole ($f(r, v) = h(r + g(v))$), where, in each case, $g: \mathbb{R} \to \mathbb{R}$ and $h: \mathbb{R} \to \mathbb{R}$ are scalar functions.  In the latter case, we may choose $g(v) = \gamma h^{-1}(v)$, such that $f(r, v) = h(r + \gamma h^{-1}(v))$, where $h$ is a squashing function as will be defined later, such that $v$ can be roughly interpreted as estimating a squashed estimate of the expected return.

In all these cases the Bellman equations \textit{define} the values of the states. The value should therefore be considered a function of the chosen non-linearity. Note that this is also true for the standard discounted formulation: the value of a state under a given discount $\gamma$ differs from the value under a discount $\gamma'\ne\gamma$, and neither is necessarily equivalent to the undiscounted objective if $\gamma'<1$ and $\gamma<1$, neither in terms of value nor in terms of induced behaviour.

\subsection{Pre-existing non-linear Bellman equations}

Humans and animals seem to exhibit a different type of weighting of the future than would emerge from the standard linear Bellman equation which leads to exponential discounting when unrolled multiple steps because of the repeated multiplication with $\gamma$. One consequence is that the preference ordering of two different rewards occurring at different times can reverse, depending on how far in the future the first reward is.  For instance, humans may prefer a single sparse reward of $+1$ (e.g., \$1) now over a reward of $+2$ (e.g., \$2) received a week later, but may also prefer a reward of $+2$ received after 20 weeks over a reward of $+1$ after 19 weeks.  Such preferences reversals \citep{Thaler:1981, Ainslie:1981, Green:1994} have been observed in human and animal studies, but cannot be predicted from exponential discounting.  Instead, hyperbolic discounting has been proposed as a well-fitting mathematical model, where a reward in $t$ steps is discounted as $R_t/(1 + k t)$, or some variation of this equation.

It has been shown that at least some of the data can be explained with a recursive formulation, called HDTD \citep{Alexander:2010}, that uses a recursion $v(s) = \E{(R_{t+1} + v(S_{t+1}))/(1 + k v(S_{t+1})}$. Note that this is a non-linear Bellman equation, due to the division by the value of $S_{t+1}$. Interestingly, mixing values for multiple exponential discounts \citep[as discussed by][]{Sutton:1995} can also closely approximate hyperbolic discounting \citep{Kurth:2009,Fedus:2019}

Separately, discounting is a useful tool to increase control performance.  In modern reinforcement learning applications, the discount factor is rarely set to $\gamma=1$, even if the goal is to optimise the average total return per episode \citep[e.g.,][]{Mnih:2015,vanHasselt:2016DDQN,Wang:2016,Hessel:2018}.  Indeed, also when learning this factor from data, the learnt value often stays below $\gamma=1$ \citep{Xu:2018}.  This makes intuitive sense: it can be substantially easier to learn a policy of control that is somewhat myopic: because we get to make more decisions later, when the sliding horizon into the future will have moved forward with time, the resulting behaviour is not necessarily much worse than the optimal policy for the far-sighted undiscounted formulation. In other words, it can be useful to learn a proxy for the true objective if the proxy is easier to learn.

In some cases it was found to be beneficial for performance to add a non-linear transform to the updates \citep{Pohlen:2018,Kapturowski:2019}: $v(s) = \E{ h( R_{t+1} + \gamma h^{-1}(v(S_{t+1}))) }$. This effectively scales down the values before updating the parametric function (a multi-layer neural network) toward the scaled-down value, and then scale the values back up with $h^{-1}(v(S_{t+1}))$ before using these in the temporal difference update.  The intuition is that it can be easier for the network to accurately represent the values in this transformed space, especially if the true values can have quite varying scales. For instance, in the popular Atari 2600 benchmark \citep{Bellemare:2013}, the different games can have highly varying reward scales, which can be hard to deal with for learning algorithms that do not do anything special to deal with this \citep{vanHasselt:2016PopArt}.

Finally, we note that many distributional reinforcement learning algorithms \citep{Bellemare:2017,Dabney:2018} can be interpreted as optimising non-linear Bellman equations.  It may be helpful to view these as part of a larger set of possible transformations of the underlying reward signal \citep[cf.][]{Rowland:2019}.

\section{Analysis of non-linear TD algorithms}
\def\w{\mathbf{w}}
We now examine properties of temporal-difference algorithms \citep{Sutton:1988} derived from non-linear Bellman equations.  When $v_{\w}$ is a parametric function with parameter $\w$, these algorithms have the following form
\[
v_{\w}(S_t) = v_\w(S_t) + \alpha (Y_{t+1} - v_\w(S_t)) \nabla_{\w} v_{\w}(S_t) \,,
\]
where
\[
Y_{t+1} \equiv f(R_{t+1}, v_\w(S_{t+1})) 
\]
is some function of the reward and next state value.

This formulation subsumes the tabular case, where $v_{\w}(s) = w_s$ is just the scalar value $w_s$ at the cell in the table corresponding to state $s$. The first question we can ask is whether the operator $T^f$ defined by $(T^f v)(s) = \E{ f(R_{t+1}, v(S_{t+1}) \mid  S_t = s, A_t \sim \pi(S_t) }$ is a contraction, which is a sufficient condition for convergence under standard assumptions \citep{Bertsekas:96}. This will, of course, depend on the choice of $f$.

\subsection{Reward transforms}
\def\argmin{\text{arg min}}
For $f(r, v) = g(r) + \gamma v$ (\emph{reward transforms}), all the standard convergence results will hold (as long as $g$ is bounded). For instance, the tabular algorithms will converge to $v(s) = \E{ g(R_{t+1}) + \gamma v(S_{t+1}) }$, and the linear algorithms will converge to the minimum of the mean-squared projected Bellman error \citep{Sutton:2009icml, SuttonBarto:2018} $\| \Pi T^f v_\w - v_\w \|_{d^\pi}$, where $\Pi$ is a projection onto the space of representable value functions, such that $\Pi v = v_{\w^*}$ where $\w^* = \argmin_\w \|v_\w - v\|_{d^\pi}$, and $\| \cdot \|_{d^\pi}$ is a weighted norm $\sum_s d^\pi(s)(\cdot)^2$, where $d^\pi(s) = \lim_{t\to\infty} P(S_t = s)$ is the steady-state probability of being in state $s$, which is assumed to be well-defined (e.g., the MDP is ergodic).

Reward transforms are more interesting than they perhaps at first appear.  For instance, a reward transform exists that approximates a hyperbolic discount factor quite well, in the context of sparse rewards, but with a standard exponential discounted value.
\def\e{\mathrm{e}}

Concretely, let
\[
H_t = \sum_{n=0} R_{t+n+1}/(1 + k n)
\]
denote a hyperbolically-discounted return with discount parameter $k$. Consider episodes with a single non-zero reward on termination. This matches the setting of ``do you prefer \$X now or \$Y later'', where we do not allow the possibility of both rewards happening. If the terminal reward occurred at time step $t+1$, this implies $H_0 = R_{t+1}/(1 + k t)$. Then, for any $t$, we would prefer a later non-zero reward $R_{t+1}$ to an immediate reward of $r$ if and only if $R_{t+1}/r > 1 + k T$.  This induces a specific preference ordering on sparse rewards.
\begin{theorem}
Consider an exponentially-discounted return
\[
G_t = \sum_{n=0}^\infty \gamma^n g(R_{t+n+1}) = g(R_{t+1}) + \gamma G_{t+1} \,,
\]
with non-linear reward transform $g$. In the episodic setting, where only the terminal reward is non-zero, then for any exponential discount with parameter $\gamma \in (0, 1)$ and hyperbolic discount with parameter $k > 0$, a reward transform $g$ exists that induces the exact same reward ordering for the geometric return $G_0$ and the hyperbolic return $H_0$, in the sense that for any episode terminating (randomly) after $T+1$ steps with reward $R_{T+1}$ we prefer a return $G_0$ with sparse non-zero reward $R_{T+1} > 0$ to an immediate reward $r$ if and only if $R_{T+1}/r > 1 + k T$.
Concretely, this holds for the transform defined by $g(R) \equiv r \e^{\eta(R/r - 1)}$, where $\eta = -\frac{\log\gamma}{k}$ is a positive constant that depends on the discount parameters $k > 0$ and $\gamma \in (0, 1)$ of the hyperbolic and geometric returns, respectively, and where $r$ is the reference reward.
\end{theorem}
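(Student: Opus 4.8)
The plan is to reduce the stated preference to a single scalar inequality and then verify that the proposed $g$ turns it into the hyperbolic threshold $R_{T+1}/r > 1 + kT$. First I would spell out the two returns being compared in the episodic, single-terminal-reward setting. If the episode terminates after $T+1$ steps with $R_1 = \dots = R_T = 0$ and $R_{T+1} > 0$, then in $G_0 = \sum_{n \ge 0}\gamma^n g(R_{n+1})$ only the $n = T$ term carries a reward-dependent contribution, so the return of the delayed sparse reward is $\gamma^T g(R_{T+1})$; receiving $r$ immediately ($T = 0$) has return $g(r)$, and since the candidate transform satisfies $g(r) = r$, this also equals the raw reference reward $r$. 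Hence ``prefer the delayed reward to $r$'' is exactly the inequality $\gamma^T g(R_{T+1}) > g(r) = r$.

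Next I would substitute $g(R) = r\,\e^{\eta(R/r - 1)}$ with $\eta = -\log\gamma / k$. A quick check confirms $g$ is a bona fide reward transform: it is positive and strictly increasing on $\mathbb{R}$ (because $\eta > 0$, using $\gamma \in (0,1)$ so $\log\gamma < 0$, and $k > 0$), and it fixes the reference value, $g(r) = r$. Plugging in, taking logarithms (all quantities are positive), and using $\log\gamma = -\eta k$ gives
\[
\gamma^T\,\e^{\eta(R_{T+1}/r - 1)} > 1 \;\;\Longleftrightarrow\;\; -\eta k T + \eta\!\left(\tfrac{R_{T+1}}{r} - 1\right) > 0 \;\;\Longleftrightarrow\;\; \frac{R_{T+1}}{r} > 1 + kT,
\]
where the last step divides by $\eta > 0$. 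This is precisely the threshold that, from the earlier computation, governs preferring the hyperbolic return $H_0 = R_{T+1}/(1 + kT)$ over an immediate $r$; since the threshold does not depend on the particular return values, the geometric-with-transform return and the hyperbolic return induce identical orderings on all such sparse episodes. For the ``randomly terminating'' phrasing I would note that the equivalence holds for each realized pair $(T, R_{T+1})$, and that monotonicity of $g$ then lifts it to comparisons in expectation.

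The main obstacle is not the (short) algebra but pinning down the return bookkeeping: one must be explicit that the comparison is between $\gamma^T g(R_{T+1})$ and the single value $r = g(r)$, so that $g(0)$ contributions from the intervening zero-reward transitions do not enter — i.e.\ that the intended ``\$X now vs.\ \$Y later'' model carries no intermediate rewards. Once that is fixed, the value $\eta = -\log\gamma/k$ is forced by matching the term linear in $T$, and requiring the match to hold for every $T$ simultaneously forces the exponential form of $g$ up to the normalisation $g(r) = r$; one should also flag that, consequently, $g$ depends on the chosen reference reward $r$ as well as on $\gamma$ and $k$.
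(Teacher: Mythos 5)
Your proof is correct and follows essentially the same route as the paper's: reduce the preference to the single inequality $\gamma^T g(R_{T+1}) > r$ and check that the exponential transform with $\eta = -\log\gamma/k$ turns its logarithm into the hyperbolic threshold $R_{T+1}/r > 1 + kT$. If anything you are more careful on the bookkeeping point: the paper dismisses the intermediate terms by asserting $g(0)=0$, which is actually false for the stated transform (since $g(0) = r\e^{-\eta} > 0$), whereas you correctly observe that the intended two-option comparison must simply carry no intermediate reward contributions.
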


\begin{proof}
Note that the reward transform function $g$ defined above satisfies $g(0) = 0$. Let $t+1$ be the time step of the only non-zero reward $R_{t+1}$. The exponentially-discounted return $G_0$ of the transformed rewards $g(R)$ is then equal to
\begin{align*}
    G_0
    & = \gamma^t g(R_{t+1}) \\
    & = \gamma^t r \e^{\eta(R_{t+1}/r - 1)} \\
    & = (\e^{\log \gamma})^t r \e^{\eta(R_{t+1}/r - 1)}\\
    & = r \e^{t\log \gamma}\e^{\eta(R_{t+1}/r - 1)}\\
    & = r \e^{\eta (R_{t+1}/r - 1) + t \log\gamma}\,.
\end{align*}
This implies that $G_0 > r$ if and only if
\[
\e^{\eta (R_{t+1}/r - 1) + t \log\gamma} > 1 \,,
\]
which is true if and only if $\eta (R_{t+1}/r - 1) + t \log\gamma > 0$. By definition of $\eta = - \frac{\log \gamma}{k}$, this is equivalent to
\begin{align*}
& t \log\gamma - \frac{\log\gamma}{k}\left(\frac{R_{t+1}}{r} - 1\right) > 0 \,,\\
\implies
& t  - \frac{1}{k}\left(\frac{R_{t+1}}{r} - 1\right) < 0 \,,\\
\implies
& t k - \left(\frac{R_{t+1}}{r} - 1\right) < 0 \,,\\
\implies
& \left(\frac{R_{t+1}}{r} - 1\right) > tk \,,\\
\implies
& \frac{R_{t+1}}{r} > 1 + t k \,.\qedhere
\end{align*}
\end{proof}
Interestingly, the predictions made by the transform above will differ from both HDTD and hyperbolic discounting for \emph{dense} rewards and/or \emph{stochastic} rewards. It is also not the only transform, in the more general class of non-linear Bellman equations, that will match the hyperbolic discounts for sparse rewards.  It is an interesting open question whether the predictions for any of these alternatives could perhaps better fit reality better than existing models.

Some readers may find it unintuitive or undesirable that the return $G_0$ as defined in Theorem 1 depends exponentially on the reward.  We note that we can easily add one additional transformation to map the returns into a different space, e.g., $G_0 = \log \sum_{n=0}^\infty \gamma^n g(R_{t+n+1})$.  As long as this outer transform (in this case the $\log$) is monotonic, this does not change the preference orderings, so the conclusion of the theorem (and the equivalence in terms of preference ordering to hyperbolic discounting) still applies.

\subsection{Non-linear discounting}
Of separate interest are non-linear transformations to the bootstrap value, as in $f(r, v) = r + g(v)$.  Linear discounting $g(v) = \gamma v$ is a special case, but we could instead use a non-linear function, which would imply that the discount factor could depend on the value (for instance similar to HDTD).
\begin{figure}
    \centering
    \includegraphics[width=8cm]{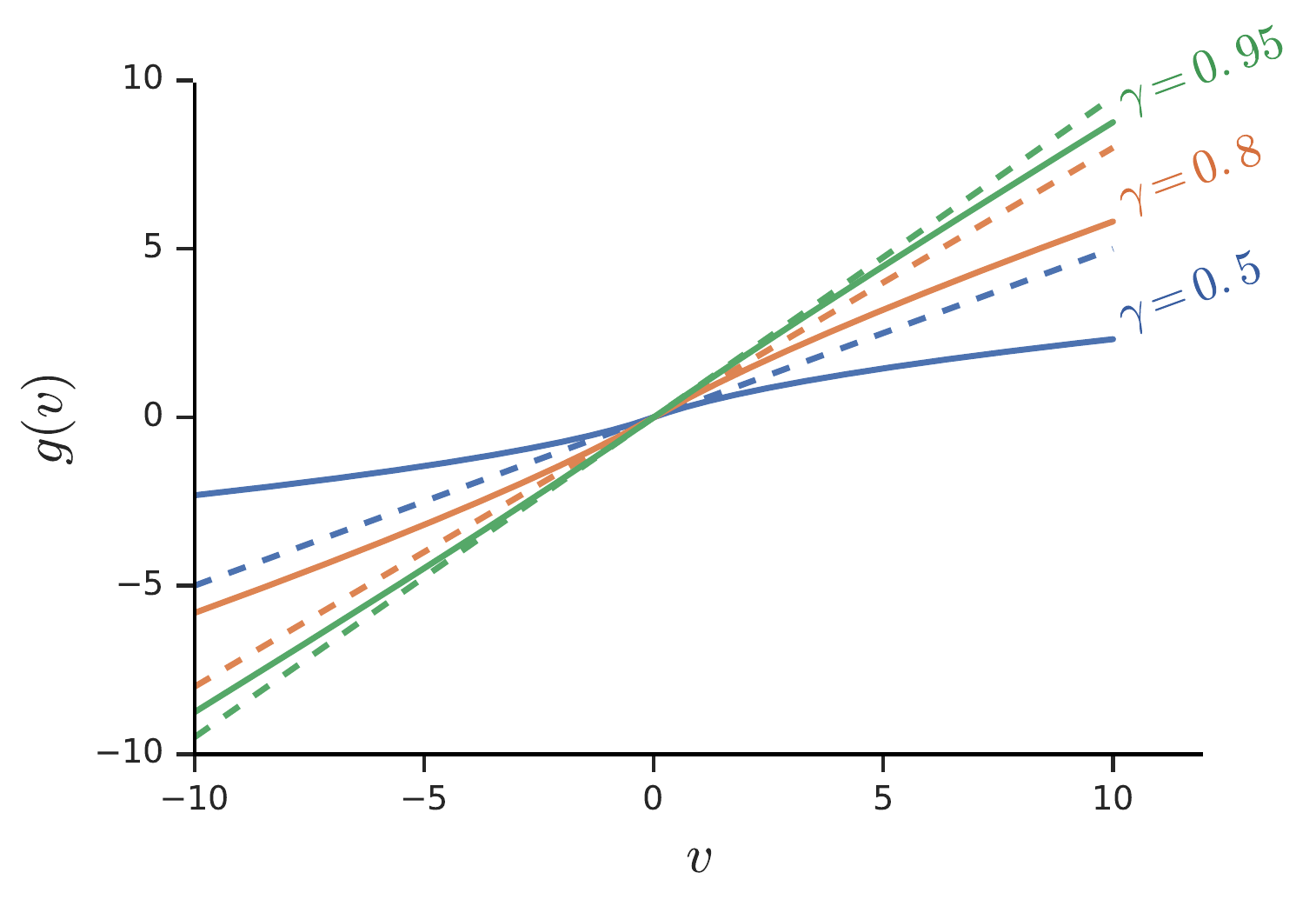}
    \caption{The effect of different non-linear discounting functions.  Dashed lines are linear discounts $g_{\gamma}(v) = \gamma v$, solid lines are non-linear discounts $g_\gamma(v) = \text{sign}(v) ((|v| + 1)^\gamma - 1)$. These functions have a maximal derivative of one at the origin (which we could choose to make smaller by introducing a $\kappa < 1$), and smaller derivatives everywhere else.  This means that these discount lead to contractions.}
    \label{picture}
\end{figure}

We propose to consider functions $g(v)$ that have the following property:
    \[
    \frac{\text d}{\text d v} g(v) \in [0, \kappa] \text{ , where } \kappa \le 1\,.
        \]
This is sufficient for the resulting Bellman operator to be a contraction with a factor $\kappa$, as we prove below.
\begin{theorem}
Let $T^g$ be defined by $(T^g v)(s) = \E{ R_{t+1} + g(v(S_{t+1}) }$. Define $v_* = T^g v_*$.  Then, $\|T^g v - T^gv_*\|_{\infty} \le \kappa \| v - v_* \|$, which means that $v_*$ is the (well-defined) fixed point, and that the operator contracts towards this fixed point with at least a factor $\kappa$.
\end{theorem}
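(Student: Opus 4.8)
The plan is to reduce the theorem to two ingredients: the hypothesis on $g'$ turns $g$ into a $\kappa$-Lipschitz function, and a conditional expectation is a non-expansion in the supremum norm. First I would record that $\frac{\mathrm d}{\mathrm d v}g(v)\in[0,\kappa]$ gives $|g(a)-g(b)|\le\kappa\,|a-b|$ for all $a,b\in\mathbb{R}$: write $g(a)-g(b)=\int_b^a g'(u)\,\mathrm d u$ by the fundamental theorem of calculus (or invoke the mean value theorem) and use $0\le g'(u)\le\kappa$; the lower bound $g'\ge0$ additionally makes $g$ monotone, but only Lipschitzness is needed below. I would also note that a Lipschitz $g$ obeys $|g(x)|\le|g(0)|+\kappa|x|$, so, assuming bounded rewards, $T^g$ maps the space $\mathcal{B}(\mathcal{S})$ of bounded functions on states --- complete under $\|\cdot\|_\infty$ --- into itself, which is what makes ``fixed point'' meaningful.

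Next I would compute, for an arbitrary bounded $v$ and any state $s$,
\[
(T^g v)(s)-(T^g v_*)(s)=\mathbb{E}\big[\,g(v(S_{t+1}))-g(v_*(S_{t+1}))\mid S_t=s,\ A_t\sim\pi(S_t)\,\big]\,,
\]
the reward term having cancelled because it enters both operators identically and independently of the value function. Bounding the integrand pointwise by $|g(v(s'))-g(v_*(s'))|\le\kappa\,|v(s')-v_*(s')|\le\kappa\,\|v-v_*\|_\infty$ and using that the expectation is an average over a probability distribution over successor states (hence preserves this uniform bound), then taking the supremum over $s$, yields $\|T^g v-T^g v_*\|_\infty\le\kappa\,\|v-v_*\|_\infty$. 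Running the identical argument with $v_*$ replaced by any second bounded $u$ shows $T^g$ is globally $\kappa$-Lipschitz, i.e.\ a $\kappa$-contraction --- this is the natural lemma to state, with the theorem's inequality the special case $u=v_*$.

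Finally, when $\kappa<1$ the operator is a strict contraction on the complete metric space $(\mathcal{B}(\mathcal{S}),\|\cdot\|_\infty)$, so Banach's fixed-point theorem gives a unique $v_*$ with $v_*=T^g v_*$, and the displayed inequality says precisely that iterating $T^g$ from any bounded initialisation converges geometrically to $v_*$ at rate at most $\kappa$. The proof is short, so there is no real ``main obstacle''; the one point deserving care is the boundary case $\kappa=1$, where $T^g$ is merely non-expansive and a fixed point need not exist (take $g=\mathrm{id}$ with a constant nonzero reward), so for $v_*$ to be genuinely well-defined one should either require $\kappa<1$ or obtain contractivity from elsewhere --- the contraction inequality itself holds verbatim for all $\kappa\le1$. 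The only other technical nicety is the mild regularity (absolute continuity of $g$) needed to integrate the derivative bound into the Lipschitz bound.
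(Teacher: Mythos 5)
Your proof is correct and follows essentially the same route as the paper: cancel the reward term, pass the supremum bound through the conditional expectation, and apply the $\kappa$-Lipschitz property of $g$ derived from the derivative bound. You additionally supply details the paper leaves implicit --- completeness of $(\mathcal{B}(\mathcal{S}),\|\cdot\|_\infty)$, the self-mapping property, the Banach fixed-point argument that makes $v_*$ genuinely well-defined, and the caveat that at $\kappa=1$ a fixed point need not exist --- all of which strengthen rather than change the argument.
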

\begin{proof}
\begin{align*}
    &\| T^g v - T^g v_* \|_\infty \\
    & = \| \E{ R_{t+1} + g(v(S_{t+1})) } - \E{ R_{t+1} + g(v_*(S_{t+1})) } \|_\infty \\
    & = \| \E{ g(v(S_{t+1})) - g(v_*(S_{t+1})) } \|_\infty \\
    & \le \| g(v) - g(v_*)\|_\infty \\
    & \le \kappa \| v - v_* \|_\infty \,.
\end{align*}
For the first inequality, we used the fact that the maximum difference between expectations is always at most as large as the maximum difference over all elements in the support of the distribution of the expectation.  For the last inequality, we used the fact that the derivative is bounded by $\kappa$, and therefore the function is $\kappa$-Lipschitz.
\end{proof}
More generally, with a slightly extension of this result, we can say for transforms of the form $f(r, v) = h(r + g(v))$ that these will contract with at least $\kappa = \kappa_h \kappa_g$, where $\kappa_h$ and $\kappa_g$ are the Lipschitz constants of $h$ and $g$, respectively.

To restrict the search space of potentially interesting functions we can, in addition to the property above, consider certain additional restrictions.
We could, for instance, require that the space of functions we consider to be parametrised with a single number $\gamma \in [0, 1]$ (we then denote these functions with $g_{\gamma}$), where we attain the undiscounted linear objective for $\gamma=1$ and a fully myopic objective that only looks at the immediate reward for $\gamma = 0$ at the extremes of the allowed range of $\gamma$. In addition, we may want the function to be symmetrical around the origin, and monotonic.
This means, we might want to consider the following properties.

\begin{enumerate}
    \item $g_0(v)=0, \forall v$ (myopic for $\gamma=0$),
    \item $g_1(v)=v, \forall v$ (undiscounted for $\gamma=1$),
    \item $g_\gamma(v) = -g_\gamma(-v)$ for all $v \in \mathbb{R}$  (symmetric),
    \item $g_\gamma(v) > g_{\gamma'}(v)$ iff $\gamma > \gamma'$, $\forall v > 0$ (monotonic).
\end{enumerate}
The symmetric requirement simply requires $g_{\gamma}$ to be odd, which implies $g_{\gamma}(0) = 0$, for all $\gamma$.

Linear transformations, where $g_\gamma(v) = \gamma v$, share these properties, but they allow for more general non-linear transformations.
As an example, consider the following class that we will call \emph{power-discounting}: $\kappa ((v + 1)^\gamma - 1)$ for $v \ge 0$ and $-\kappa((-v + 1)^\gamma - 1)$ for $v < 0$.  For large $v$, this function becomes very similar to $\kappa v^\gamma$ (or $-\kappa(-v)^\gamma$ for negative $v$), but it has the desired properties enumerated above.  In particular, its derivative is $\kappa/(|x| + 1)^{1 - \gamma}$, which tends to $\kappa/(|x|)^{1 - \gamma}$ for large $|x|$, which implies that larger values will contract faster, and is at most $\kappa$, for $x = 0$. Some examples are shown in Figure \ref{picture}.

Note that the value of a state under non-linear discounting depends on the stochasticity in the environment's dynamics and in the agent's policy, as this stochasticity interacts with non-linear transformations such as power discounting in ways that might not be immediately obvious. To illustrate this, consider the action values under power discounting in a state $s$, from which two actions $a$ and $b$ are available for the agent to select. In both cases, the immediate reward is zero, but selecting action $a$ leads deterministically to a state $x$ with value $v(x)=1$, while selecting action $b$ either leads to a state $y$ with value $v(y) = 2/p$, or it terminates. The action values under linear discounting at $q(s, a) = \gamma$ and $q(s, b) = 2 \gamma$. For any $\gamma > 0$, $q(s, b) > q(s, a)$---action $b$ is always optimal. Under power discounting the agent's preference between the two actions may reverse for large $p$: the agent can become risk averse. In Figure \ref{risk-aversion} we plot the action gaps (y-axis) for multiple values of $\gamma$ (x-axis) and $p$ (different lines). Note that for small $p$ the action gaps can be negative: the agent prefers the certain value of state $v(x) = 1$ over the uncertain value of $v(y) = 2/p$, even if the latter has a larger (undiscounted) expected value.

\begin{figure}
    \centering
    \includegraphics[width=8cm]{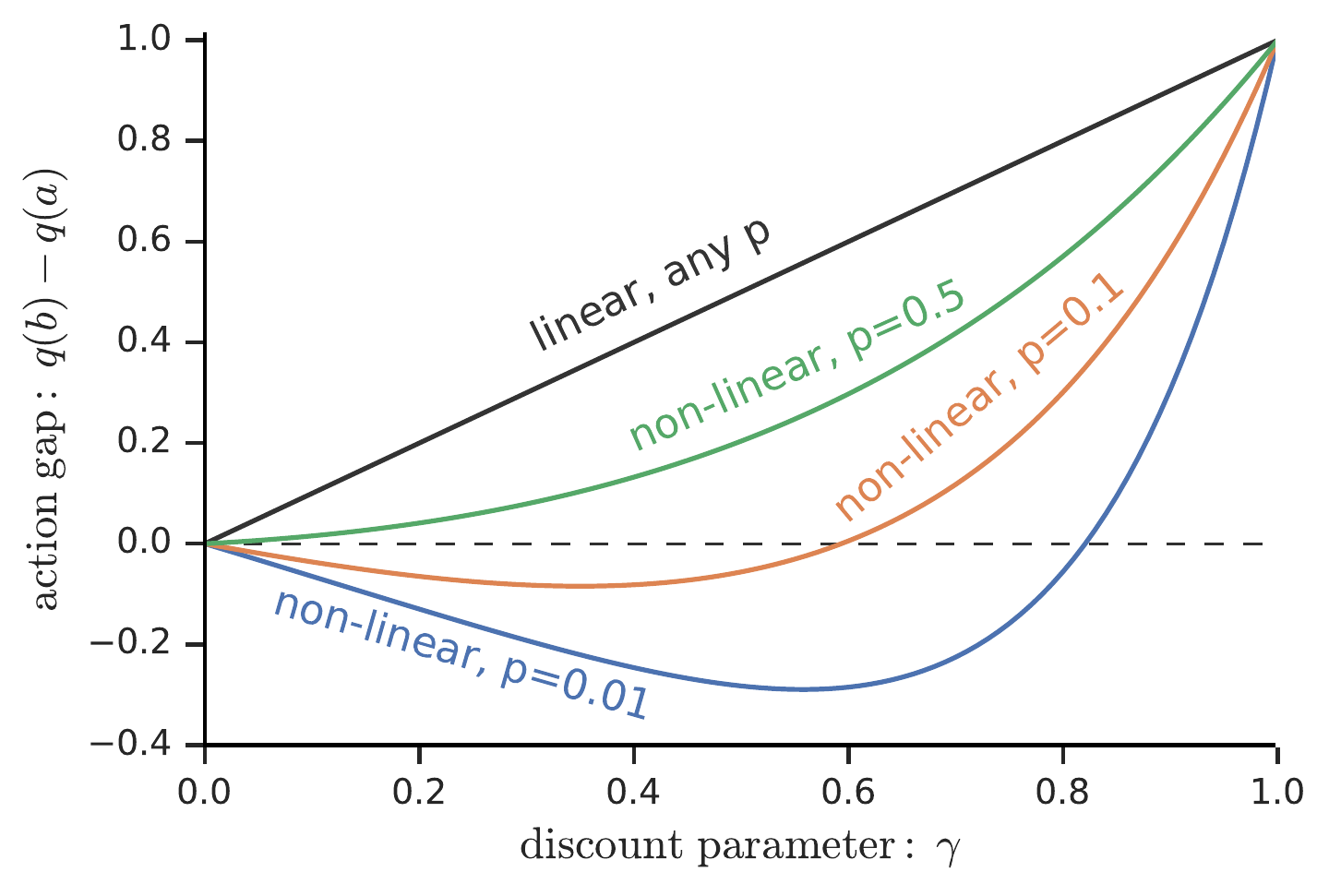}
    \caption{Action gaps for different discounts.  The y-axis is the difference between the values of actions $b$ and $a$. Action $b$ that either terminates immediately (with zero reward) or leads, with probability $p$, to a state with to a state $y$ with value $2/p$. Action $a$ always leads to a state $x$ with value $v(x)=1$. The discount parameter $\gamma$ is on the x-axis. The different lines correspond to different problems, with different transition probabilities $p$.  Note that for power-discounting the lines can go below zero, but not for linear discounting.}
    \label{risk-aversion}
\end{figure}

\section{Prediction and control performance}
In addition to a larger design space to model natural phenomena, like hyperbolic discounting, non-linear Bellman equations may offer an interesting path towards algorithms that work well for prediction or control.  This is similar to how it is common practice to add a discount factor, even if we actually mostly care about the undiscounted returns, simply because the resulting performance is better.
Apart from a few hints in the literature \citep[e.g.][]{Pohlen:2018, Kapturowski:2019}, this is still a relatively unexplored area, and this seems an interesting avenue for future research.

Non-linear Bellman equations allow us to capture rich and varied information about the world. It is thought to be important for our agents to learn many things \citep{Sutton:2011}, and non-linear Bellman equations offer a rich and powerful toolbox to express even more varied predictive questions.

\small

\bibliography{references}
\bibliographystyle{abbrvnat}

\end{document}